\newtheorem{theorem}{Theorem}
\newtheorem{definition}{Definition}[section]
\title{Broadly Applicable Targeted Data Sample Omission Attacks}
\author{
    Zhi-Hua Zhou
    \affiliations
    Nanjing University
    \emails
    pcchair@ijcai-21.org
}
\author{
Guy Barash$^1$
\and
Eitan Farchi$^2$\and
Sarit Kraus$^1$\And
Onn Shehory$^1$
\affiliations
$^1$Bar Ilan University, Israel\\
$^2$IBM Haifa Research, Israel\\
\emails
Guy.Barash@wdc.com,
Farchi@IBM.com,
sarit@cs.biu.ac.il,
onn.shehory@biu.ac.il
}
\begin{document}

\maketitle

\begin{abstract}
We introduce a novel clean-label targeted poisoning attack on learning mechanisms. While classical poisoning attacks typically corrupt data via addition, modification and omission, our attack focuses on data omission only. Our attack misclassifies a single, targeted test sample of choice, without manipulating that sample. We demonstrate the effectiveness of omission attacks against a large variety of learners including deep neural networks, SVM and decision trees, using several datasets including MNIST, IMDB and CIFAR. The focus of our attack on data omission only is beneficial as well, as it is simpler to implement and analyze.
We show that, with a low attack budget, our attack's success rate is above $80\%$, and in some cases $100\%$, for white-box learning. It is systematically above the reference benchmark for black-box learning. For both white-box and black-box cases, changes in model accuracy are negligible, regardless of the specific learner and dataset. We also prove theoretically in a simplified agnostic PAC learning framework that, subject to dataset size and distribution, our omission attack succeeds with high probability against any successful simplified agnostic PAC learner. 
\end{abstract}

\section{Introduction}
Machine learning (ML) mechanisms were typically designed without considering security risks, as recognized by a plethora of 
studies \cite{papernot2016limitations}. Attacks on learners were introduced, many of which corrupt training data. The latter, referred to as {\it data poisoning attacks} \cite{chakraborty2018adversarial}, modify training data to manipulate model behavior at test time.
Many poisoning attacks aim to manipulate the model and impact performance and accuracy. Their aim is that some data samples, not necessarily specific ones, be erroneously classified. Targeted poisoning attacks, e.g., \cite{chen2017targeted} ,\cite{shafahi2018poison}, aim to misclassify a specific targeted data sample, and only that 
sample (or sample set), with a negligible change in model accuracy. 

In this study we focus on a clean-label data omission attack---a specific type of data poisoning attack. While data poisoning typically requires that the attacker gain access to data samples to modify them or their labels, clean-label data omission does not require that, as it does not modify data samples (or labels thereof)---it merely omits some samples. 
We refer to this attack as a {\it Targeted Data Omission} (TDO) attack. Evidently, TDO is simpler to implement compared to attacks that require sample manipulation. Recent studies \cite{chakraborty2018adversarial}, \cite{liu2018survey}, \cite{shafahi2018poison} introduce targeted poisoning attacks, however, unlike TDO, those attacks are not
focused on omission. A data omission attack is introduced in \cite{our_old_paper}, however, unlike our study, it does not examine black-box attacks, it examines only binary classification tasks, and it does not evaluate against benchmarks.

Poisoning attacks are often evaluated against a speciﬁc victim learning mechanism. Thus, if an attack proves effective against learner $L_1$, it may be ineffective against learner $L_2$. Defense against such attacks can be implemented by switching between 
learners, such that at least one learner is immune to the attack. In contrast, our attack is not learner-specific;
it succeeds in attacking various learners. In white-box attacks (i.e., the specific learning method is known to the attacker), the success rate of TDO is typically 80-100\%. In black-box attacks (i.e., the specific learning method is unknown to the attacker), as expected, the success rate decreases, yet it is significantly higher than state-of-the-art benchmark success rates \cite{schwarzschild2020just}.
We demonstrate the effectiveness of TDO against multiple, diverse learners, white-box and black-box alike, via extensive experiments, with datasets including MNIST, IMDB and CIFAR. 


\subsection{Contribution}
We study a novel non-intrusive, clean-label targeted data poisoning attack 
focused on data omission.  
We demonstrate attack effectiveness against a broad class of learners, across datasets. 
We examine 
white-box and black-box attacks. We further show that 
our attack surpasses state-of-the-art benchmark performance. 
Our attack targets a specific data sample and successfully misclassifies it, yet its effect on model accuracy is negligible. This is shown 
theoretically in the context of a simplified agnostic PAC learning, and experimentally. 

\subsection{Related work} \label{sec:related}


A data omission attack is a specific type of data poisoning attack. Poisoning attack methods exhibit several attack approaches, usually adding new data or corrupting existing data (or both), as e.g., in 
\cite{rubinstein2009antidote}, \cite{mozaffari2014systematic}. In contrast, our attack involves no data injection or corruption -- it merely entails data omission. Work in \cite{whenMLfail} presents a method of modeling attacks evaluating against ML. 
The metrics it provides examine several capabilities employed by the attack methods of interest, however they do not examine data omission. 

Data poisoning in \cite{xiao2015support} focuses on malicious label changing. Our attack, however, is a clear-label attack, with no change to labels. Some poisoning attacks allow data omission, but not as an exclusive element of the attack. In \cite{ma2019data}, data deletion is one element of its data manipulation attack, but the effects of data deletion are not examined. In contrast, we focus on exclusive data omission as a major attack vehicle.  
Targeted attacks on a single test sample, \cite{szegedy2013intriguing}, \cite{brown2017adversarial}, resemble the idea of TDO. There, the sample itself is manipulated with adversarial noise. TDO does not modify the target sample. Rather, via omission of other samples, the model itself misclassifies the attacked sample.

A targeted clear-label omission attack was presented in \cite{our_old_paper}.That study is possibly the only published omission-only attack. While we adopt a similar attack approach, our study differs from, and significantly improves upon, that study. It examines only white-box attacks and binary classification tasks, while we examine both white-box and black-box attacks with 2, 3 and 10 classes. Our experimental evaluation is much broader: many more classifiers, more datasets, and comparison to state-of-the-art benchmarks \cite{schwarzschild2020just}. In addition to superior results, we support TDO with theoretical foundations. 

Several defense methods against poisoning attacks were developed, e.g., data sanitation \cite{steinhardt2017certified}, \cite{jagielski2018manipulating}. 
Since TDO introduces no new or modified data, it is immune to defense via data sanitation. 
Many poisoning attacks, 
e.g., \cite{biggio2011support} and \cite{nelson2008exploiting}, 
aim to decrease model accuracy. In contrast, TDO targets a specific data sample while keeping the model's accuracy intact for other test samples.

Poisoning attacks mentioned above as well as others, e.g., \cite{huang2011adversarial}, \cite{li2014feature}, \cite{papernot2016transferability} and \cite{jagielski2018manipulating}, each target a specific learner. In contrast, TDO can be applied to various learners. 
Cross-learner attack capabilities are found in \cite{our_old_paper}, though TDO is much broader in learner coverage and success rates. Cross-learner applicability was shown in \cite{shafahi2018poison}, however not for an omission attack.

\section{Preliminaries and problem statement}
\label{preliminaries}
Assume a data space $X$, a class space $C$, a probability measure $P$ over $X$ and a function $f : X \rightarrow C$ that maps elements of $X$ to classes in $C$. Also assume a learner $L$, an attacker $A$ and a
dataset $S= \{x_1, \ldots, x_n \}$ sampled i.i.d. from $X$ whose elements are labeled by $f$. Further assume that $L$ and $A$ can access $S$, however $P$ and $f$ are unknown to them. A distance function $d(x_i, x_j$) measures the distance between $x_i, x_j \in X$.
The goal of $L$ is to learn a function $g: X \rightarrow C$ that best approximates $f$. Once $g$ is learned by $L$, its accuracy is estimated on a fresh validation sample obtaining $Ac(g)$ (by an oracle, as $f$ is unknown to $L$ and $A$). 


Given test sample $\hat{x} \notin S$, $L$ uses $g$ to assign to $\hat{x}$ a specific label $c \in C$. The goal of $A$ is to alter $L$'s prediction to $c' \neq c \in C$. For convenience, without loss of generality, 
let $\{\textit{\mbox{src}},\textit{\mbox{trgt}}\} \subseteq C$ and $c= \textit{\mbox{src}}$. $A$'s attack aims for $L$ to learn $c' = \textit{\mbox{trgt}}$, instead of $\textit{\mbox{src}}$. $A$ has an attack budget $k$ which is the number of data points $A$ may remove from $S$. For this, $A$ can strategically select $S_k \subseteq S$, $|S_k| \leq k$ for removal, to create a new subset $\hat{S}$ such that  $\hat{S} \cap S_k = \emptyset$ and $\hat{S} \cup S_k = S$.

{\bf Problem statement:} Given the settings above, 
devise an attack that omits up to $k$ data points from $S$ to produce $\hat{S}$; when $L$ is trained on $S$, it produces $g$ with accuracy $Ac(g)$; when $L$ is trained on $\hat{S}$, it produces $\hat{g}$, which consistently misclassifies $\hat{x}$, with accuracy $Ac(\hat{g})$; Accuracies are equivalent, i.e., $|Ac(g)-Ac(\hat{g})| < \alpha$.


\section{Theoretical background}
\label{theory}

\subsection{Attack definition and design} 
Given $X$, $C$, $P$, $L$ and $f : X \rightarrow C$ as specified above. For simplicity, we focus on a class space $C=\{-1, 1\}$. A hypotheses set $H$ is a set of such $f$ functions. 
$L$ attempts to learn $g : X \rightarrow C$ that best approximates $f$.  
Once $g$ is chosen by $L$, it is evaluated against $f$ under the probability measure $P$. To measure the accuracy of $g$, a penalty of $1$ is incurred for each $x_i \in X$ such that $g(x_i) \neq f(x_i)$, and $0$ otherwise. The average accuracy is the overall loss of $g$. 
\begin{definition}{Model Loss.}
The model loss $\mbox{\it Ls}(g) = E(I(f,g))) = P[\{ x \in X : g(x) \neq f(x)\}] $ where $I(f(x), g(x)) = 1$ if $f$ and $g$ disagree and $0$ otherwise. 
\end{definition}

Next we define an agnostic Probably Approximately Correct (PAC) learning. In fact, the classic definition \cite{valiant1984theory} is slightly different as it requires the learning to occur on any probability $P$ over $X \times C$. In order to distinguish from the classic definition, we call our learning framework Simplified Agnostic PAC learning.

\theoremstyle{definition}
\begin{definition}{Simplified Agnostic PAC Learnability (SPAC).}
\label{agnostic_PAC}
A hypothesis class H of functions of type $X \rightarrow C$ is simplified agnostic PAC learnable if $\exists L$ a learning algorithm, such that
$\forall \epsilon, \delta \in (0,1)$, 
$\forall P$ over $X$ is a probability measure, and
$\forall f: X \rightarrow C$, when running $L$ on $m \geq m(\epsilon,\delta)$ i.i.d $(x_1, f(x_1)), \ldots, (x_m, f(x_m))$ examples generated by $P$ and labeled by $f$, $L$ returns a function $g: X \rightarrow C$ such that, with a probability of at least $1 - \delta$ (over the choice of the examples), $\mbox{\it Ls}(g) < min_{h \in H} \mbox{\it Ls}(h) + \epsilon$.
\end{definition}

In Definition \ref{agnostic_PAC} the learner meets the learning requirement. I.e., with a probability of at least $1 - \delta$ (over the choice of the examples), $\mbox{\it Ls}(g) < min_{h \in H} \mbox{\it Ls}(h) + \epsilon$, for any probability distribution $P$ and labeling function $f$. An adversary may modify values of $f$ and $P$ to affect learning results. To facilitate data omission, we focus on modifying the probability $P$, effectively ``omitting" parts of $X$.   


Assume continuous probability $P$ chosen by nature. Adversary $A$ attacks point $\hat{x} \in X$ s.t. $f(\hat{x})=-1$. As $P$ is continuous, the probability of sampling the specific point $\hat{x}$ is $0$.  $A$ tries to change the learning in a sphere $B(\hat{x}, r)$ to increase the probability that the value learned at $\hat{x}$ is $1$.  Theorem \ref{setAttackAgnostic} captures the way in which such an attack is performed.

\begin{theorem}
\label{setAttackAgnostic}
For any $\epsilon, \delta$  agnostic SPAC learning algorithm, $L$, over $H$.  Nature chooses a function $f : X \rightarrow C$ and a continuous probability distribution $P$ over $X$. 
\begin{enumerate}
\item
 Let $\gamma > \epsilon + min_{h \in H} \mbox{\it Ls}(h)$.  Here $min_{h \in H} \mbox{\it Ls}(h)$ is taken over $P$. The adversary $A$ chooses $\hat{x} \in X$ and $r > 0$ so that $B(\hat{x}, r) = \widetilde{X} \subset X$ and $P(\{x \in \widetilde{X} | f(x) = 1\}) > \gamma $.   Next, $A$ can change the probability distribution to $P'$ so that when the $L$ learns using a sample from $P'$, the probability under $P$ or $P'$ of labeling points as $1$ on $\widetilde{X}$ by $L$ will be at least $\gamma - (\epsilon + min_{h \in H} \mbox{\it Ls}(h))$. 
\item
Assume $P[x \in \widetilde{X} \land f(x) = -1 ] < \gamma^{'}$ then the learner error, $P[g(x) \neq f(x) ]$, is bounded above by $ \epsilon + min_{h \in H} \mbox{\it Ls}(h) + \gamma^{'}$ when the learner learns using a sample from $P'$. 
\end{enumerate}
Both claims above will occur with probability
$1-\delta$ when learning from a sample of at least $m(\epsilon, \delta)$ points from $X$ using $P'$ and labeling them using $f$.  
\end{theorem}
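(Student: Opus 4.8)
The plan is to realize the omission attack as a \emph{conditioning} operation on the distribution. Let $A=\{x\in\widetilde X : f(x)=1\}$ and $B=\{x\in\widetilde X : f(x)=-1\}$, so that $\widetilde X=A\cup B$ with $A\cap B=\emptyset$ and, by hypothesis, $P(A)>\gamma$. I would take $P'$ to be $P$ conditioned on $X\setminus B$, i.e.\ $P'(E)=P(E\setminus B)/(1-P(B))$; this is exactly the continuous analogue of deleting from the sample every drawn point that falls in $B$, and it is a legitimate probability measure because $P(A)>\gamma>0$ forces $P(X\setminus B)>0$. Crucially the labeling function $f$ is never altered, so the construction stays clean-label. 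Since $L$ is SPAC for \emph{every} distribution, I apply its guarantee to the pair $(P',f)$: training on $m\ge m(\epsilon,\delta)$ i.i.d.\ samples from $P'$ labeled by $f$ returns $g$ with $\mbox{\it Ls}_{P'}(g)<\min_{h\in H}\mbox{\it Ls}_{P'}(h)+\epsilon$, with probability at least $1-\delta$. This single invocation is what supplies the final ``$1-\delta$ / sample size $m(\epsilon,\delta)$'' clause of the theorem.

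The one real obstacle is that SPAC measures optimality against the \emph{training} distribution $P'$, whereas both conclusions are phrased through $\min_{h\in H}\mbox{\it Ls}_{P}(h)$, the optimum over the original $P$. I would bridge the two with the elementary lemma $\min_{h\in H}\mbox{\it Ls}_{P'}(h)\le \min_{h\in H}\mbox{\it Ls}_{P}(h)/(1-P(B))$, obtained by plugging the $P$-optimal hypothesis $h^\star$ into $\mbox{\it Ls}_{P'}$ and noting that restricting the disagreement set to $X\setminus B$ can only drop mass: $P(\{h^\star\neq f\}\setminus B)\le P(\{h^\star\neq f\})=\min_{h\in H}\mbox{\it Ls}_{P}(h)$. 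The delicate point is then to verify that the renormalization factor $1/(1-P(B))\ge 1$ does not corrupt either bound; as sketched below it either cancels outright or works in my favour.

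For claim (1), under $P'$ the ball is ``pure'': its entire $P'$-mass sits on $A$, where $f\equiv 1$. Hence every point of $A$ that $g$ labels $-1$ is a genuine disagreement, so $P'(\{x\in A: g(x)=-1\})\le \mbox{\it Ls}_{P'}(g)$. Writing $P'(\{x\in A: g(x)=1\})=P'(A)-P'(\{x\in A: g(x)=-1\})$ and using $P'(A)=P(A)/(1-P(B))\ge P(A)>\gamma$ together with the bridge lemma gives $P'(\{x\in\widetilde X: g(x)=1\})> \big(P(A)-\min_{h\in H}\mbox{\it Ls}_{P}(h)\big)/(1-P(B))-\epsilon$. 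Because the numerator is positive (it exceeds $\gamma-\min_{h\in H}\mbox{\it Ls}_{P}(h)>\epsilon$) and $1/(1-P(B))\ge 1$, the factor only inflates it, leaving the clean lower bound $\gamma-(\epsilon+\min_{h\in H}\mbox{\it Ls}_{P}(h))$. This establishes claim (1) with the measure taken under $P'$; since the theorem allows the measure under $P$ or $P'$, this suffices.

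For claim (2), I split the true risk by the region $B$: $\mbox{\it Ls}_{P}(g)=P(\{g\neq f\}\cap(X\setminus B))+P(\{g\neq f\}\cap B)$. The second summand is at most $P(B)<\gamma'$. The first equals $(1-P(B))\,\mbox{\it Ls}_{P'}(g)$, and here the pleasant cancellation occurs: bounding $\mbox{\it Ls}_{P'}(g)<\min_{h\in H}\mbox{\it Ls}_{P'}(h)+\epsilon\le \min_{h\in H}\mbox{\it Ls}_{P}(h)/(1-P(B))+\epsilon$ and multiplying by $(1-P(B))$ turns the first summand into at most $\min_{h\in H}\mbox{\it Ls}_{P}(h)+(1-P(B))\epsilon$. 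Adding the two pieces yields $\mbox{\it Ls}_{P}(g)<\epsilon+\min_{h\in H}\mbox{\it Ls}_{P}(h)+\gamma'$, exactly as claimed. I expect the main effort to lie in this change-of-measure bookkeeping and in confirming, via the continuity of $P$ (which makes $P(\{\hat x\})=0$), that controlling the \emph{measure} of the region $g$ maps to $1$ is the correct rigorous surrogate for the informal goal ``$\hat x$ is relabeled to $1$''.
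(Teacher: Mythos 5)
Your proof is correct, but it takes a genuinely different route from the paper's. The paper's adversary builds $P'$ by transferring the probability mass of $B=\{x\in\widetilde X: f(x)=-1\}$ onto $A=\{x\in\widetilde X: f(x)=1\}$ \emph{inside} the ball, so that $P'$ agrees with $P$ off $\widetilde X$, satisfies $P'(B)=0$, and dominates $P$ on subsets of $A$; no renormalization appears, and both claims follow by comparing $P$- and $P'$-errors region by region. You instead take $P'$ to be $P$ conditioned on $X\setminus B$, which introduces the global factor $1/(1-P(B))$ but is arguably the more faithful idealization of the actual attack: deleting the $-1$-labeled samples in the ball produces exactly the empirical conditional distribution, matching the implementation in Section 3.2. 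Your route also forces you to confront, and resolve via the bridge lemma $\min_{h\in H}\mbox{\it Ls}_{P'}(h)\le\min_{h\in H}\mbox{\it Ls}_{P}(h)/(1-P(B))$, a point the paper's argument elides: the SPAC guarantee obtained by training on $P'$ is relative to $\min_{h}\mbox{\it Ls}_{P'}(h)$, whereas the theorem's bounds are stated with the minimum taken over $P$ (and for the paper's mass-transfer construction this comparison is not automatic either, since $P'$ inflates the mass of disagreement sets inside $A$). Your bookkeeping is sound: the factor $1/(1-P(B))\ge 1$ only strengthens the lower bound in claim (1) and cancels exactly in claim (2), and proving claim (1) under $P'$ suffices given the theorem's ``under $P$ or $P'$'' phrasing (it also holds under $P$, since $P(\{x\in A: g(x)=-1\})=(1-P(B))\,P'(\{x\in A: g(x)=-1\})\le \min_{h}\mbox{\it Ls}_{P}(h)+\epsilon$ while $P(A)>\gamma$ directly). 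The net trade-off: the paper's construction keeps the measure-theoretic accounting trivial, while yours better matches the omission semantics and makes the $\min$-over-$P$ versus $\min$-over-$P'$ issue explicit rather than implicit.
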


\begin{proof}
The adversary first defines the probability $P'$ by
redistributing the probability mass on $\{x \in \widetilde{X} | f(x) = -1\}$ to $\{x \in \widetilde{X} | f(x) = 1\}$, thus obtaining a new probability distribution $P'$ for which $P'(\{x \in \widetilde{X} | f(x) = -1\}) = 0$.  Obviously, $P'(\widetilde{X}) = P(\widetilde{X}) > \gamma$.

As we transferred the probability mass from $\{x \in \widetilde{X} | f(x) = -1\}$ to $\{x \in \widetilde{X} | f(x) = 1\}$ to obtain $P'$, we have that $P'(E) \geq P(E)$ for any event $E \subseteq \{x \in \widetilde{X} \land f(x) = 1 \}$.  Also, for any event $E \subseteq \{x \in X |  \in X - \widetilde{X}\}$ we have that $P'(E) = P(E)$ as the probability mass on $X - \widetilde{X}$ was not modified.  
Next we sketch the proof of parts one and two of the theorem.  The detailed proof is in Appendix A.
\vspace{-3pt}
\begin{enumerate}[leftmargin=15pt]
\item
With a probability of at least $1-\delta$, 
$\epsilon + min_{h \in H} \mbox{\it Ls}(h) > \mbox{\it Ls}(g) = P^{'}[g(x) \neq f(x)]$.  The probability of the error incurred by $P^{'}$ was not changed on $X-\widetilde{X}$.  On $\widetilde{X}$ the probability of the error incurred by $P'$ is $0$ when $g(x) \neq f(x)$ and $f(x) = -1$. We thus have that $P'[g(x) \neq f(x) \land x \in \widetilde{X} \land f(x) = 1]  < \epsilon + min_{h \in H} \mbox{\it Ls}(h)$.  The same applies for $P$ as 
the error incurred by $P'$ on $g(x) \neq f(x)$ and $f(x) = 1$ is greater than the error incurred by $P$ on the same set.  Given that $P(\{x \in \widetilde{X} | f(x) = 1\}) > \gamma $, we get the desired result. 
\item
As before, with probability $1-\delta$, $\epsilon + min_{h \in H} \mbox{\it Ls}(h) > \mbox{\it Ls}(g) = P'[g(x) \neq f(x)]$.  As the errors incurred on $X-\widetilde{X}$ by $P$ and $P'$ are the same and the error incurred by $P'$ is greater than the error incurred by $P$ on $g(x) \neq f(x) \land f(x) = 1 \land x \in  \widetilde{X}$, we have that $P[g(x) \neq f(x)] \leq \epsilon + min_{h \in H} \mbox{\it Ls}(h) +  P[g(x) \neq f(x) \land f(x) = -1 \land x \in \widetilde{X} ] \leq \epsilon + min_{h \in H} \mbox{\it Ls}(h) + \gamma^{'} $.
\end{enumerate}
\end{proof}

\vspace{-0.3in}
\subsection{Implementing the attack}

Bridging between theory and practice is detailed below. 
\begin{itemize}[leftmargin=10pt]
    \item The attacker $A$ of Theorem \ref{setAttackAgnostic} chooses $\widetilde{X} \subset X$ and modifies the probability distribution on $\widetilde{X}$. In practice, $A$ does not have that control. Instead, once a sample $S$ is chosen for learning, $A$ can intervene and omit points in $S$ to obtain an attack sample $S{'}$.  To 
    implement a practical attack, $A$ may omit points from $S$ so that the empirical distribution observed by $L$ on a chosen $\widetilde{X} \subset X$ is modified as in Theorem \ref{setAttackAgnostic}.  Given that the empirical distribution for enough data represents the actual distribution $P$, it is ``as if" $A$ implements the attack described in Theorem \ref{setAttackAgnostic}.
    \item Any PAC learning algorithm is a SPAC learning algorithm.  A learning algorithm with a finite hypothesis space has a finite VC dimension and is thus PAC learnable \cite{shalev2014understanding}, Section 4.2.  Learners attacked in this work can be thought of as having a hypothesis set composed of their parameters, e.g., the weights on the edges of a neural network.  This includes KNN as explained in \cite{KLESK20111882}.  In addition, the learning algorithms attacked in this work can be considered to have a finite hypothesis set by the "Discretization Trick" (see remark 4.2 in \cite{shalev2014understanding}) and are thus PAC learnable.  Essentially, the trick utilizes the fact that the learning model parameters have finite possible values when mapped to a specific machine.    
    \item To apply Theorem \ref{setAttackAgnostic} we choose 
    $\gamma > \epsilon + min_{h \in H} \mbox{\it Ls}(h)$. $\epsilon$ bounds the learner's expected error and can be deduced from its performance. $min_{h \in H} \mbox{\it Ls}(h)$ may be difficult to compute. Yet, if the learning approximates the learned function well, $min_{h \in H} \mbox{\it Ls}(h)$ is expected to be negligible.        
\end{itemize}

We proceed to design the attack (based on Theorem \ref{setAttackAgnostic}).  

%


\begin{enumerate}
\item
Nature chooses a continuous probability distribution $P$ over $X$ and $f: X \rightarrow C$. $P$ and $f$ are unknown to $A$ and $L$.  
\item
Nature samples $S = \{x_1, \ldots, x_m\}$ from $X$ using $P$
and labels them using $f$.
\item
\label{attack}
The attacker $A$ chooses $S^{'} \subset S$.  $S^{'}$ is chosen to be the points in $S$ contained in a sphere $B(\overline{x}, r)$. $A$ estimates the probability $P(z \in B(\overline{x}, r) \land f(z) = 1)$ as $\frac{|S'| - |z \in S' \land f(z) = -1|}{|S|}$ using the empirical distribution deduced from $S$. This is done to ensure that $P(z \in B(\overline{x}, r) \land f(z) = 1) > \gamma$, as required by Theorem \ref{setAttackAgnostic}. 
\item
Next, $A$ drops points in $S^{'}$ that do not have the desired function value, $1$. Thus, $A$ obtains a modified set $S^{''} \subseteq S^{'}$. $A$ then creates a new modified sample set $S_t = (S - S^{'}) \cup S^{''}$ which is the empirical representation of distribution $P'$ of Theorem  \ref{setAttackAgnostic}.
The empirical distribution estimate of $P'(z \in B(\overline{x}, r) \land f(z) = 1)$ now becomes 
$\frac{|S'| - |z \in S' \land f(z) = -1|}{|S| - |z \in S' \land f(z) = -1|}$ which is still larger than $\frac{|S'| - |z \in S' \land f(z) = -1|}{|S|} > \gamma$.  Thus, the requirements of Theorem \ref{setAttackAgnostic} are still met.   
\item
Learner $L$ learns a model using $S_t$ instead of $S$. 
\end{enumerate}


Following this, we restrict the number of omitted points to a given budget $k$, which proves effective experimentally. 

\section{Attack methods}
\label{attack_methods}
Multiple methods for choosing the right samples to alter can be used to carry out an attack. We introduce and study three. 
The inputs to attack methods are a budget $k$, a target data sample $\hat{x}$, a class $\mbox{trgt}$, a dataset $S$ and a learner $L$ or its surrogate.  

\label{assumptions}

\label{attack:KNN}
The \textit{KNN} (K-nearest-neighbours) attack method is inspired by the KNN classification algorithm \cite{altman1992introduction}. 
Given $S$, $k$ and $\hat{x}$, 
we choose a distance measurement appropriate to $S$, such as Euclidean or Cosine distance, and calculate the distance between each point $s_i, f(s_i) \neq \textit{\mbox{trgt}}$  and $\hat{x}$. We choose the $k$ points that are closest to $\hat{x}$ with $f(s_i) \neq \textit{\mbox{trgt}}$ as subset $S_k$ to be removed from $S$, thus creating the attacked dataset $\hat{S} \doteq S \setminus S_k$ (see  Algorithm \ref{algorithm1}).

\begin{algorithm}[htb]
\caption{The KNN attack method}
\label{algorithm1}
\begin{algorithmic}[1] 
\STATE $S_k$ = \{\}   $S_{original} = S$\;
\FOR{j \textbf{in} (1,..,$k$)} 
        \STATE $\#$ Check only points not labeled \textit{\mbox{trgt}} \\
        \STATE $\#$ Choose the point $s_j$ closest to target sample \\
        \STATE $\#$ Remove it from dataset $S$ and add to $S_k$\\
        
\ENDFOR
   
\STATE $\hat{S} = S_{original} \setminus S_k$ \;
\STATE Return $\hat{S}$\;
\end{algorithmic}
\end{algorithm}

\vspace{-6pt}
The \textit{greedy} attack method uses a greedy strategy to determine $S_k$. Given $S$, $k$, $\hat{x}$, a victim $L$ or surrogate of $L$, the algorithm populates $S_k$ with $k$ points, computed as follows. We iterate over all of the points $s_i \in S$. In each $i$, we generate $S_i = S \setminus \{s_i\}$, 
compute a model using $S_i$, and use it to compute the probability that the label of $\hat{x}$ is $\textit{\mbox{src}}$. In both algorithms \ref{algorithm2} and \ref{algorithm3} we denote the probability of learner $L$, given dataset $S$ to classify target sample $\hat{x}$ as $\textit{\mbox{src}}$ as $Pr(L,S,src,\hat{x})$. Note that $L$ could be the victim or its surrogate.  We choose the point $s_j$ for which the lowest probability was derived, remove it from $S$, and add it to $S_k$. This is repeated $k$ times. 
 Algorithm \ref{algorithm2} describes the process.

\begin{algorithm}[htb]
\caption{The Greedy attack method}
\label{algorithm2}
\begin{algorithmic}[1] 
\STATE $S_k$ = \{\}, $S_{original} = S$\;
\FOR{j \textbf{in} (1,..,$k$)} 

        \STATE $\#$ Greedily choose the optimal next point \\
        \STATE $s_j = argmin_{s_i \in{S}} \big(Pr(L, S \setminus \{s_i\},\textit{\mbox{src}},\hat{x}) \big)$\\

        \STATE $\#$ Remove it from dataset $S$ and add to $S_k$\\
\ENDFOR
\STATE $\hat{S} = S_{original} \setminus S_k$ \;
\STATE Return $\hat{S}$\;

\end{algorithmic}
\end{algorithm}
  
\vspace{-6pt}
\label{genetic_how}
The \textit{genetic} attack method uses a genetic
algorithm (GA) to find $\hat{S} \doteq S \setminus S_k$ that optimizes the attack. As common in GAs, we preset $GEN$---the number of generations, and $OS$---the number of offspring per generation. This enables convergence. 
In our GA, an offspring is a set $\textit{\mbox{Off}} \subset S$ of size $k$, generated via genetic computation from parent sets. 
The fitness of an offspring $\textit{\mbox{Off}}$ is evaluated via a fitness function $F(\textit{\mbox{Off}})$ (Eq. \ref{fitness_function}). The two offsprings with the highest fitness value are selected for proceeding generations. Eventually, the most fit offspring is selected as $S_k$. 
\begin{equation}
\label{fitness_function}
F(\textit{\mbox{Off}}) = \mbox{\it Pr}(L, \textit{\mbox{Off}},\mbox{\it trgt},\hat{x}) \in [0,1]
\end{equation}

We initialize the GA by creating two $k$-sized randomly generated subsets of $S$, $\textit{\mbox{Off}}_1 \subset S$ and $\textit{\mbox{Off}}_2 \subset S$. 
These are the ``parents" of the first round of the GA. In each generation we first create the offsprings by randomly picking $k$ points from $\textit{\mbox{Off}}_1 \cup \textit{\mbox{Off}}_2$. To avoid local minima, each offspring undergoes mutation in which each point has a probability $\frac{1}{k}$ of being replaced by a randomly selected point from $S$. In each generation we measure the fitness of each offspring and each parent (using $F()$). 
This is described in Algorithm \ref{algorithm3}. 

\begin{algorithm}[tb]
\caption{The Genetic attack method} 
\label{algorithm3}
\begin{algorithmic}[1] 
\STATE $\#$ Initial\ parents \\
\STATE $\textit{\mbox{Off}}_1, \textit{\mbox{Off}}_2 = \textit{choose-k-samples-from S}$ \; 
\FOR{$gen \in (1,\ldots, GEN$)}
    \FOR{$t \in (3,\ldots, OS + 2)$}
        \STATE $\#$ Create\ offsprings \\
        \STATE $\textit{\mbox{Off}}_t = \textit{choose-k-samples-from-(}\textit{\mbox{Off}}_1 \cup  \textit{\mbox{Off}}_2)$ \;
        \STATE $\#$ Mutate\ offsprings \\
        \STATE $\#$ $Select\ c\in \{0,\ldots,k\} from\ some\ distribution$ \\
        \STATE $\#$ $Randomly\ remove\ c\ samples\ from\ \mbox{Off}_t $\\
        \STATE $\mbox{Off}_t = \mbox{Off}_t \cup (\textit{choose-c-samples-from S})$ \;
    \ENDFOR
    \STATE $\#$ Choose\ top-2\ runners\ as\ new\ parents\ $\textit{\mbox{Off}}_1, \textit{\mbox{Off}}_2$ \\
\ENDFOR
\STATE $\#$ Pick the current top runner \\
\STATE $S_k = \mbox{Off}_1$;
\STATE $\hat{S} = S \setminus S_k$ \;
\STATE Return $\hat{S}$;

\end{algorithmic}
\end{algorithm}

\section{Experiments}
\label{section_experiments}
We evaluated the TDO attack experimentally with 4 different datasets: MNIST, IMDB, CIFAR-10 and a synthetic dataset, denoted {\em Synthetic}. We conducted 4 sets of experiments, one set per dataset.
In each set, we examine multiple attacker-victim combinations. 
We repeated each experiment multiple times to validate and increase confidence.
Each set comprises multiple instances. An instance $i$ refers to a specific dataset $S_i$, victim learner $L_i$, attack method $A_i$, attack budget $k_i$, and target data-point to be attacked ${\hat{x}}^i$. In all experiments we limit the budget to $k_i \leq \sqrt{|S_i|}$. 
In each instance $i$, $L_i$ learns a model ahead of the attack and predicts the class of ${\hat{x}}^i$. Then $A_i$ attacks $S_i$ to create dataset $\hat{S}_i \subset S_i$. Next $L_i$ learns a model using $\hat{S}_i$ and predicts the class of ${\hat{x}}^i$ to examine the success of the attack and measure the drop in accuracy. 

Our experiments comprise both black-box and white-box attacks. As in \cite{schwarzschild2020just}, cases where the attacker knows the exact victim configuration and uses it as a surrogate for the attack are considered white-box attacks. Cases where such knowledge is not available are considered black-box attacks. Yet, in black-box attacks the attacker may know the general type of the victim. 
In black-box attacks in our experiments, we intentionally use a different setting of the classifier. When the KNN attack is used, a surrogate is not needed, unless the victim is a deep network. 

We attack 9 victim classifiers: ANN, Decision Tree (DTree), KNN, Gaussian Naive Bayes (GNB), SVM, 1D convolutional Neural network (1DconvNet), MobileNet-V2, ResNet-18 and VGG11. 
%
Configurations of victim classifiers (in bold), and their corresponding surrogate configuration in black-box attacks are: 
\begin{enumerate}
\item SVM: \textbf{linear kernel} $\rightarrow$ polynomial kernel
\item DTree: \textbf{\textit{gini} criterion} $\rightarrow$ \textit{entropy} criterion
\item KNN: \textbf{K=5} $\rightarrow$ K=3
\item Naive Bayes: \textbf{Gaussian} $\rightarrow$ Multinomial.
\item ANN, 2 hidden layers, neurons per layer: \textbf{16} $\rightarrow$ 8
\item \textbf{MobileNetV2, VGG11, ResNet18} $\rightarrow$ Googlenet

\end{enumerate}


\subsection{Experiment {\em Set I}: attack on {\em Synthetic}}
\label{dataset_I}
In experiment {\em Set I}, $S =$ {\em Synthetic}, similar to the dataset used in \cite{biggio2011support}. 
%
{\em Synthetic} comprises $N= 400$ data samples in ${R}^2$, generated from a 2-dimensional normal distribution with two center-points $c_{\textit{\mbox{src}}}$ and  $c_{\textit{\mbox{trgt}}}$. The samples are labeled either $\mbox{\it src}$ or $\mbox{\it trgt}$.
A successful attack should result in $L$ misclassifying $\hat{x}$ as $\mbox{\it trgt}$ instead of $\mbox{\it src}$. 
For each $\{A, L\}$ pair, $50$ dataset instances $\{S_1, \ldots, S_{50}\}$ are created. 
For each $S_i$, $A$ attempts to cause $L$ to misclassify $\hat{x}$. The success rate of $A$ is averaged across 50 instances. This is performed for both black-box and white-box scenarios. Results appear in Tables \ref{tab:Result_WhiteBox} and \ref{tab:Result_BlackBox}.
The goal of experiment {\em Set I} is to demonstrate the success of TDO on a rather simple dataset, and to examine the capabilities and limitations of the attack algorithms.


\subsection{Experiment {\em Set II}: attack on MNIST}
\label{dataset_II}
In experiment {\em Set II}, $S =$ MNIST, a widely-used dataset that contains images of handwritten digits. Each image is $28\times 28$ pixels. MNIST image labels are in  $\{0,1, \ldots , ,9 \}$, where the label of an image is the corresponding digit. We conduct two experiments on MNIST. 
The first evaluates the success rate of each attack against different types of learners. The second examines attack performance across MNIST classes.

In the first experiment, for each $\{A, L\}$ pair, we examine the success of $A$ in causing $L$ to misclassify $\hat{x}$, whose initial label is $\textit{\mbox{src}}$ 
as $\textit{\mbox{trgt}}$. 
We repeat 50 times per and average the results. 
 In each instance, we randomly select labels $\langle \textit{\mbox{src}}, \textit{\mbox{trgt}} \rangle$ such that $\textit{\mbox{src}} \neq \textit{\mbox{trgt}}$. $S$ is generated by randomly selecting 200 samples with each label, i.e., 400 in total. Using $S$, a model is trained. Then, $\hat{x}$ is chosen with the label $\textit{\mbox{src}}$ such that the model classifies it correctly. Then, attack $A$ is applied. Accordingly, $k=20$ data points are removed from $S$ to generate $\hat{S}$. Next, the model is re-trained on $\hat{S}$. We measure $A$'s success in causing the model to missclasify $\hat{x}$ as $\textit{\mbox{trgt}}$. 

The second experiment examines changes in attack performance across MNIST classes. 
Here, $A = genetic$, as it proved most effective in experiment {\em Set I}. 
For each experiment instance, 2 different MNIST labels are chosen as $\textit{\mbox{src}}$ and $\textit{\mbox{trgt}}$. The instance executes as in the first MNIST experiment. 
%
We examine all possible $\langle \textit{\mbox{src}},\ \textit{\mbox{trgt}} \rangle$ class pairs (90 altogether), with 50 repeats for each pair. 
We measure the average success rate for each pair (see results in Figure \ref{fig:mnist_heat}).

\subsection{Experiment {\em Set III}: attack on IMDB}
\label{dataset_III}
In experiment {\em Set III}, $S=$ IMDB, 
a complex real-life dataset with 50K samples. 
Several state-of-the-art models \cite{camacho2017role} can correctly predict IMDB sample labels with high accuracy. 
In IMDB, samples are English text describing movies. Labels, "1" and "0", represent positive and negative sentiment, respectively.

In {\em Set III}, $L$ is a DNN comprised of a word embedding layer, a 1D convolutional layer followed by a max-pooling layer and 2 dense layers. $L$ learns a model to predict the class of test samples. 
Next, $\hat{x}$ is chosen randomly such that it is labeled "1", and predicted as such by the model. 
$L$ has achieved 80\% accuracy on IMDB, comparable to the state of the art. 
Since accuracies of other learners were at most $70\%$, they were dropped from this experiment (results in Table \ref{tab:Result_WhiteBox}). 

\subsection{Experiment {\em Set IV}: attack on CIFAR10}
\label{dataset_IV}
In experiment {\em Set IV}, $S=$ CIFAR10 and the victims are MobileNetV2, VGG11, and ResNet18, which are pre-trained deep networks taken from \cite{torchvision}. We perform black-box attacks only. The surrogate network is Googlenet. 
We examine TDO performance and compare it to an established benchmark
provided in \cite{schwarzschild2020just}. For this, the settings of our experiment are as in the benchmark, with focus on its \textit{Transfer learning} part.  
%
During the experiment, the victim learns the 10 classes of CIFAR-10 with 250 images from each class. The target sample $\hat{x}$ is a randomly selected CIFAR-10 image. 
Since this is a black-box attack, the attacker does not know the victim and uses Googlenet as a surrogate for feature extraction, producing a vector representation for the images. Using cosine-similarity as a distance function, the KNN attack method is applied. 

\section{Results}
\label{results}
\label{results:compare_section}
In this section we illustrate the effects of the TDO attack, we present its success rate across datasets and victims, and compare it to benchmark results. 
Here, the success rate of a pair $\{L, A\}$ is the percentage of the experiments in which $A$'s attack on $L$ resulted in missclassification of the target sample.

\begin{figure} [ht]
  \centering
  \includegraphics[height=6CM,width=8.9CM]{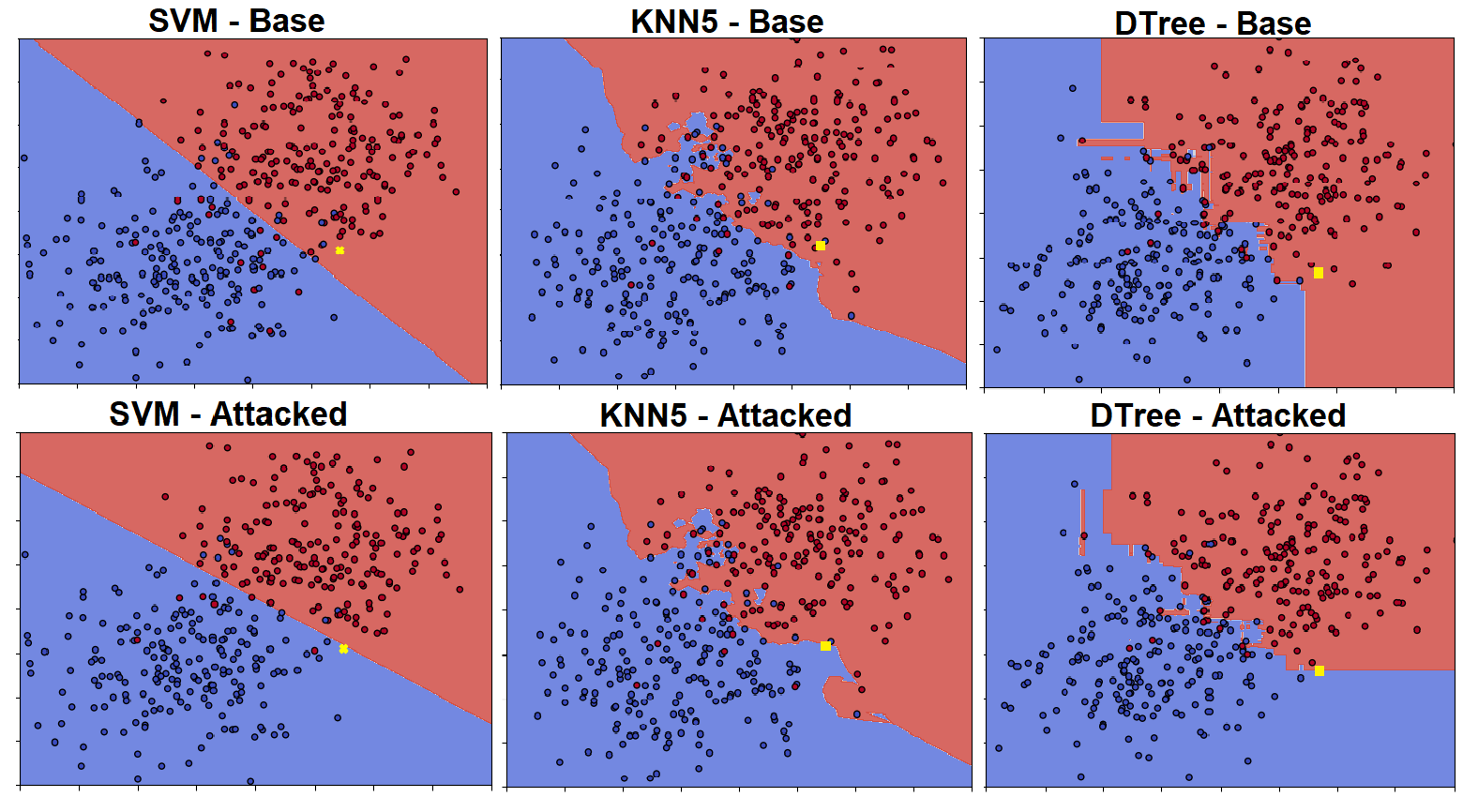}
  \caption{Decision line before and after attack against SVM, KNN5 and DTree. TDO succeeds: the target $\hat{x}$ (yellow) was misclassified.}
  \label{fig:all_learner_before_and_after}
\vspace{-12pt}
\end{figure}

Figure \ref{fig:all_learner_before_and_after} presents examples of results of TDO attacks against SVM, KNN5 and DTree with {\em S= Synthetic}. The border line between blue and red indicates the model's decision line (representing decisions $c = \textit{\mbox{trgt}}$, $c = \textit{\mbox{src}}$, respectively). 
Images at the top 
show the base decision line, prior to the attack. Images at the bottom show the post-attack decision line. The yellow point is the target sample point, $\hat{x}$. 
Observe that, as a result of the attack, the decision line has shifted enough to alter the prediction of $\hat{x}$. The change in model accuracy here and across all TDO experiments was negligibly small. 

Table \ref{tab:Result_WhiteBox} presents success rates of TDO white-box attacks ($Genetic$ and $Greedy$) in experiment sets I-III. White-box attacks were performed against 6 victim learners with datasets IMDB, MNIST and {\em Synthetic}. For brevity, Table \ref{tab:Result_WhiteBox} presents the leading results. Additional results are in Appendix B. One can observe that
the success rates of $Genetic$ are very high and dominate (see other results in the Appendix). Since white-box settings provide access to the victim's configuration, it allows $Genetic$ to adjust well to the specific model, thus delivering very high attack success rates. Observing Table \ref{tab:Result_WhiteBox}, it may be suggested that the GNB learner defend against $Genetic$. However this holds only for {\em Synthetic}, as when GNB is used with MNIST, $Genetic$ succeeds in all attacks.

\begin{table}[ht]
\small
\centering
\begin{tabular}{lllll}
\hline
Dataset & Classes & Victim & Method & Result	\\
\hline
IMDB & 2 & 1DConvNet & Genetic & 0.80	\\
MNIST & 2 & ANN & Genetic & 1.00	\\
MNIST & 3 & ANN & Genetic & 1.00	\\
Synthetic & 2 & KNN5 & Genetic & 0.99	\\
Synthetic & 2 & ANN & Genetic & 0.88	\\
Synthetic & 2 & SVM & Genetic & 0.87	\\
Synthetic & 2 & ANN & Greedy & 0.86	\\
Synthetic & 2 & Dtree & Genetic & 0.85	\\
Synthetic & 2 & GNB & Genetic & 0.58	\\
\hline
\end{tabular}
\normalsize
\caption{Success rates in white-box attacks}
\label{tab:Result_WhiteBox}
\end{table}

\vspace{-3pt}
Table \ref{tab:Result_BlackBox} presents success rates of TDO black-box attacks in experiment sets I, II and IV. Black-box attacks were performed against 8 victim learners with datasets CIFAR-10, MNIST and {\em Synthetic}. Table \ref{tab:Result_BlackBox} presents the leading results. Additional results are in Appendix B. Here, KNN is the dominant attack method. Not surprisingly, success rates are generally lower than in white-box attacks, however they are still impressive across victims.  We compare our CIFAR-10 results to benchmark results reported in Table 2 in \cite{schwarzschild2020just}. Victims MobileNetV2 and VGG11 (lines 1, 2 in  our Table \ref{tab:Result_BlackBox}) are the same victims as in the benchmark. There, the averaged success rate of the two is 0.085. We arrived at success rates of 0.15 and 0.14, respectively, averaged to 0.145, which outperforms the reference results. The ResNet result of TDO is even more impressive. In the benchmark it was suggested that an attack on ResNet is the most difficult and the success rate is poor. In contrast, TDO delivers an impressive success rate of 0.25. Another advantage of our attack, compared to the benchmark, is that the computational resources required by KNN are significantly lower than those required by the attack methods reported there. 

\begin{table}[h]
\small
\centering
\begin{tabular}{llllll}
\hline
Dataset & Clss & Surrogate & Victim & Method & Result    \\
\hline
CIFAR10 & 10 & Googlenet & MoblNetV2 & KNN & 0.15     \\
CIFAR10 & 10 & Googlenet & VGG11 & KNN & 0.14     \\
CIFAR10 & 10 & Googlenet & Resnet18 & KNN & 0.25     \\
MNIST & 2 & X & GNB & KNN & 0.80     \\
MNIST & 2 & X & ANN & KNN & 0.45     \\
MNIST & 2 & ANN & ANN & Genetic & 0.27     \\
MNIST & 2 & SVM & SVM & Genetic & 0.15     \\
MNIST & 3 & X & ANN & KNN & 0.69     \\
MNIST & 3 & GNB & Dtree & Genetic & 0.44     \\
Synthetic & 2 & X & Dtree & KNN & 0.90     \\
Synthetic & 2 & X & ANN & KNN & 0.65     \\
Synthetic & 2 & X & SVM & KNN & 0.48     \\
Synthetic & 2 & X & GNB & KNN & 0.17     \\
\hline
\end{tabular}
\normalsize
\caption{Success rates in black-box attacks}
\label{tab:Result_BlackBox}
\end{table}

The second MNIST experiment examines white-box attack performance across MNIST classes. As observed in Table \ref{tab:Result_WhiteBox}, when the victim is ANN, the success rate is 1.0. When the victim is SVM, the success rate is 0.82. However, success 
varies across classes. Figure \ref{fig:mnist_heat} visualizes this variation. Further to this, we demonstrate success in attacking multi-class classification tasks (2, 3 and 10 classes, shown in the Tables).


\vspace{-10pt}
\begin{figure} [ht]
  \centering
  \includegraphics[height=4CM]{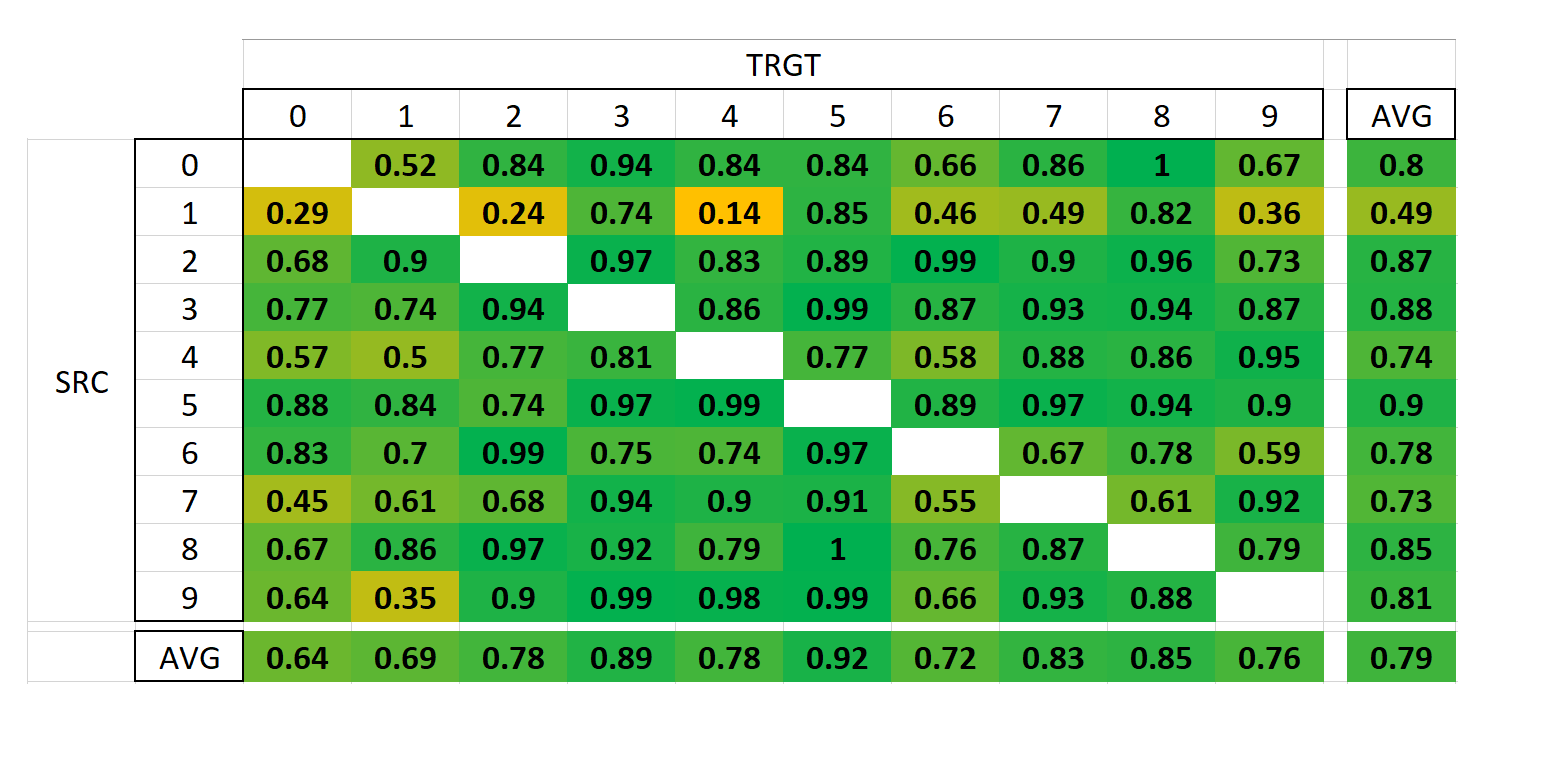}
  \vspace{-4pt}
  \caption{Success in misclassifying from a class in X axis to a class in Y axis. Darker background indicates a higher success rates.}
  \label{fig:mnist_heat}
\vspace{-8pt}
\end{figure}

\vspace{-8pt}
\section{Conclusion}\label{conclusion}

In conclusion, we have shown
that the clean-label targeted data omission attack -- TDO -- succeeds in attacking multiple, diverse learners, both in white-box and in black-box attack settings. We have empirically shown success across omission strategies at a low attack budget. We have additionally shown that TDO succeeds across datasets, despite significant differences among them in size, in number and type of features, and in the data itself. Further, we show that TDO succeeds in attacking multi-class classification cases. We also found out that the genetic method dominates the success in white-box attacks, and the KNN method dominates the success in black-box attacks. As desired, the effect of TDO on model accuracy is negligible, thus leaving the vast majority of samples in the attacked dataset intact.
Comparison of TDO black-box attacks performance to benchmark results \cite{schwarzschild2020just} demonstrates success rates comparable to, and even higher than, state-of-the-art results. 
In addition to a comprehensive empirical evaluation, we provide theoretical foundations for the data omission attack in the context of simplified PAC learning. 
Despite impressive attack capabilities and performance, TDO has open issues. Firstly, black-box success rates, although above the benchmark, need improvement. Additionally, omission may be detectable, e.g., when dataset size is known. Future research should focus on these open questions and on defense mechanisms against this attack.

\bibliographystyle{named}
\bibliography{main}

\end{document}


\maketitle
\appendix

\section{Theorem 3.1 -- proof details}
\label{appendix_A}
Theorem 3.1 has two parts. A sketch of their proof is presented in the main paper. The detailed proof of the two parts follows in this appendix.
\begin{proof}
(Theorem 3.1)
\begin{description}
\item [Part 1]

As $X-\widetilde{X}$ and $\widetilde{X}$ are a partition of $X$  we have that with probability at least $1-\delta$, 
$\epsilon + min_{h \in H} \mbox{\it Ls}(h) > \mbox{\it Ls}(g) = P'[g(x) \neq f(x)] =  P'[g(x) \neq f(x) \land x \notin \widetilde{X}]   +  P'[g(x) \neq f(x) \land x \in \widetilde{X}]$.  Thus, $P'[g(x) \neq f(x) \land x \in \widetilde{X}] < min_{h \in H} \mbox{\it Ls}(h) + \epsilon$. Note that $P'[g(x) \neq f(x) \land x \in \widetilde{X}] =   
P'[g(x) \neq f(x) \land x \in \widetilde{X} \land f(x) = 1] + P'[g(x) \neq f(x) \land x \in \widetilde{X} \land f(x) = -1]$.  As $P'[g(x) \neq f(x) \land x \in \widetilde{X} \land f(x) = -1] = 0$ by the definition of $P'$, $ P'[g(x) \neq f(x) \land x \in \widetilde{X} \land f(x) = 1] < min_{h \in H} \mbox{\it Ls}(h) + \epsilon$.  In addition, $P'[g(x) \neq f(x) \land x \in \widetilde{X} \land f(x) = 1] \ge P[g(x) \neq f(x) \land x \in \widetilde{X} \land f(x) = 1]$ as we did not decrease the probability on any subset of $\{x \in \widetilde{X} | f(x) = 1\}$ when constructing $P'$. Thus, $P[g(x) \neq f(x) \land x \in \widetilde{X} \land f(x) = 1]  < \epsilon + min_{h \in H} \mbox{\it Ls}(h)$ and $P[g(x) = f(x) \land x \in \widetilde{X} \land f(x) = 1] > \gamma - (\epsilon + min_{h \in H} \mbox{\it Ls}(h))$  as well as $P'[g(x) = f(x) \land x \in \widetilde{X} \land f(x) = 1] > \gamma -( \epsilon + min_{h \in H} \mbox{\it Ls}(h))$.   Thus, $P'[g(x) = 1 \land x \in \widetilde{X}] > \gamma - (\epsilon + min_{h \in H} \mbox{\it Ls}(h))$ as well as $P[g(x) = 1 \land x \in \widetilde{X}] > \gamma - (\epsilon + min_{h \in H} \mbox{\it Ls}(h))$ as desired.
\item [Part 2]

$X-\widetilde{X}$, $\widetilde{X}$ is a partition of $X$.  Thus,  with probability at least $1-\delta$, 
$\epsilon + min_{h \in H} \mbox{\it Ls}(h) > \mbox{\it Ls}(g) = P'[g(x) \neq f(x)] =  P'[g(x) \neq f(x) \land x \notin \widetilde{X}]   +  P'[g(x) \neq f(x) \land x \in \widetilde{X}] = P[g(x) \neq f(x) \land x \notin \widetilde{X}]   +  P'[g(x) \neq f(x) \land x \in \widetilde{X}]$.  

Further analyze $P'[g(x) \neq f(x) \land x \in \widetilde{X}] =   
P'[g(x) \neq f(x) \land x \in \widetilde{X} \land f(x) = 1] + P'[g(x) \neq f(x) \land x \in \widetilde{X} \land f(x) = -1] = P'[g(x) \neq f(x) \land x \in \widetilde{X} \land f(x) = 1] \ge P[g(x) \neq f(x) \land x \in \widetilde{X} \land f(x) = 1] $.  Thus, $\epsilon + min_{h \in H} \mbox{\it Ls}(h) \ge P[g(x) \neq f(x) \land x \notin \widetilde{X}] + P[g(x) \neq f(x) \land x \in \widetilde{X} \land f(x) = 1] $.  

As $P[x \in \widetilde{X} \land f(x) = -1 ] < \gamma^{'}$ we have that $P[g(x) \neq f(x)] = P[g(x) \neq f(x) \land x \notin \widetilde{X}] + P[g(x) \neq f(x) \land x \in \widetilde{X} \land f(x) = -1] + P[g(x) \neq f(x) \land x \in \widetilde{X} \land f(x) = 1] \leq \epsilon + min_{h \in H} \mbox{\it Ls}(h) +  P[g(x) \neq f(x) \land x \in \widetilde{X} \land f(x) = -1] \leq \epsilon + min_{h \in H} \mbox{\it Ls}(h) +  P[f(x) = -1 \land x \in \widetilde{X} ] \leq \epsilon + min_{h \in H} \mbox{\it Ls}(h) + \gamma^{'} $.  
\end{description}
\end{proof}

\section{Complete success rate tables}
Tables that present the leading success rates of TDO white-box and black-box attacks are in the main paper. In this appendix we provide tables of all of the results. 

Table \ref{tab:Result_WhiteBox_appendix} presents the success rates of TDO white-box attacks in experiment sets I-III. White-box attacks were performed against 6 victim learners with datasets IMDB, MNIST and {\em Synthetic}. This Table presents all attack results.

\begin{table}[h]
\small
\centering
\begin{tabular}{lllll}
\hline
Dataset & classes & victim & method & Results	\\
\hline
IMDB & 2 & 1DConvNet & Genetic & 0.80	\\
MNIST & 2 & ANN & Genetic & 1.00	\\
MNIST & 2 & GNB & Genetic & 1.00	\\
MNIST & 2 & GNB & Greedy & 1.00	\\
MNIST & 3 & ANN & Genetic & 1.00	\\
MNIST & 3 & SVM & Genetic & 1.00	\\
MNIST & 2 & KNN5 & Genetic & 0.90	\\
MNIST & 2 & SVM & Genetic & 0.82	\\
MNIST & 3 & KNN5 & Genetic & 0.55	\\
MNIST & 2 & ANN & Greedy & 0.54	\\
MNIST & 2 & KNN5 & Greedy & 0.25	\\
MNIST & 2 & SVM & Greedy & 0.05	\\
Synthetic & 2 & KNN5 & Genetic & 0.99	\\
Synthetic & 2 & ANN & Genetic & 0.88	\\
Synthetic & 2 & SVM & Genetic & 0.87	\\
Synthetic & 2 & ANN & Greedy & 0.86	\\
Synthetic & 2 & Dtree & Genetic & 0.85	\\
Synthetic & 2 & GNB & Genetic & 0.58	\\
Synthetic & 2 & Dtree & Greedy & 0.55	\\
Synthetic & 2 & GNB & Greedy & 0.52	\\
Synthetic & 2 & KNN5 & Greedy & 0.36	\\
Synthetic & 2 & SVM & Greedy & 0.17	\\
\hline
\end{tabular}
\normalsize
\caption{Success rates in white-box attacks (complete table)}
\label{tab:Result_WhiteBox_appendix}
\end{table}

Table \ref{tab:Result_BlackBox_appendix} presents the success rates of TDO black-box attacks in experiment sets I, II and IV. Black-box attacks were performed against 8 victim learners with datasets CIFAR-10, MNIST and {\em Synthetic}. This Table presents all attack results.

\begin{table}[h]
\small
\centering
\begin{tabular}{llllll}
\hline
Dataset & clss & Surrogate & victim & method & Results	\\
\hline
CIFAR10 & 10 & Googlenet & Resnet18 & KNN & 0.25	\\
CIFAR10 & 10 & Googlenet & MobleNetV2 & KNN & 0.15	\\
CIFAR10 & 10 & Googlenet & VGG11 & KNN & 0.14	\\
CIFAR10 & 10 & Googlenet & AlexNet & KNN & 0.08	\\
MNIST & 2 & X & KNN5 & KNN & 0.85	\\
MNIST & 2 & X & GNB & KNN & 0.80	\\
MNIST & 3 & X & ANN & KNN & 0.69	\\
MNIST & 2 & ANN & ANN & Greedy & 0.46	\\
MNIST & 2 & X & ANN & KNN & 0.45	\\
MNIST & 3 & GNB & Dtree & Genetic & 0.44	\\
MNIST & 2 & GNB & GNB & Genetic & 0.43	\\
MNIST & 2 & GNB & GNB & Greedy & 0.43	\\
MNIST & 2 & ANN & ANN & Genetic & 0.27	\\
MNIST & 2 & X & SVM & KNN & 0.18	\\
MNIST & 3 & X & Dtree & KNN & 0.16	\\
MNIST & 2 & SVM & SVM & Genetic & 0.15	\\
MNIST & 3 & X & KNN5 & KNN & 0.15	\\
MNIST & 3 & X & SVM & KNN & 0.10	\\
MNIST & 2 & SVM & SVM & Greedy & 0.07	\\
Synthetic & 2 & X & KNN5 & KNN & 1.00	\\
Synthetic & 2 & X & Dtree & KNN & 0.90	\\
Synthetic & 2 & X & ANN & KNN & 0.65	\\
Synthetic & 2 & X & SVM & KNN & 0.48	\\
Synthetic & 2 & X & GNB & KNN & 0.17	\\
\hline

\end{tabular}
\normalsize
\caption{Success rates in black-box attacks (complete table)}
\label{tab:Result_BlackBox_appendix}
\end{table}